\documentclass[11pt]{article}

\usepackage[final]{acl}

\usepackage{times}
\usepackage{latexsym}

\usepackage[T1]{fontenc}

\usepackage[utf8]{inputenc}

\usepackage{microtype}

\usepackage{inconsolata}

\usepackage{booktabs}       
\usepackage{amsfonts}       
\usepackage{amsmath}
\usepackage{graphicx} 
\usepackage{nicefrac}       
\usepackage{xcolor}         
\usepackage{pifont}
\usepackage{algorithm}
\usepackage{algorithmic}
\usepackage{colortbl}
\usepackage{multirow}
\usepackage{rotating}
\usepackage{subcaption}
\usepackage{bm}
\usepackage{amsthm}

\usepackage{hyperref}
\usepackage{url}
\usepackage{tikz}
\usetikzlibrary{positioning, arrows.meta, shapes.geometric, fit, calc, backgrounds}

\newtheorem{theorem}{Theorem}[section]

\usepackage[most]{tcolorbox}
\tcbuselibrary{listings}
\definecolor{lightgray}{gray}{0.95}
\newtcolorbox{promptbox}[1][]{
  colback=lightgray, 
  colframe=gray,     
  boxrule=0.5pt,     
  arc=1mm,           
  fontupper=\scriptsize\ttfamily, 
  title={Prompt},    
  #1
}

\definecolor{LightRed}{rgb}{0.99,0.91,0.91}
\definecolor{LightBlue}{rgb}{0.88,0.92,0.95}
\definecolor{LightYellow}{rgb}{0.98,0.96,0.89}

\title{Mitigating Query Shortcuts in Chain-of-Thought\\with Context-Aware Decoding}

\author{Shin'ya Yamaguchi \\
  NTT\\
  \texttt{shinya.yamaguchi@ntt.com} \\
  }

\begin{document}
\maketitle
\begin{abstract}
While Chain-of-Thought (CoT) prompting improves the reasoning performance of large language models (LLMs), recent studies have shown that final answers do not always depend on the generated rationales. In particular, LLMs may rely on query-specific shortcuts rather than the information provided by their reasoning steps. In this paper, we investigate whether context-aware decoding (CAD), an existing contrastive decoding method, can mitigate such shortcut behavior when applied to CoT reasoning. We study this application as Rationale Contrastive Decoding (RCD), which contrasts the next-token distributions conditioned on the query with and without its rationale, amplifying tokens whose likelihood increases due to the rationale. We provide a theoretical interpretation of this decoding rule via a KL-regularized reward-maximization objective. Experiments across five reasoning benchmarks demonstrate that RCD improves answer accuracy over standard CoT. Furthermore, intervention analyses using ground-truth and random rationales reveal that RCD increases the behavioral dependence of final answers on the rationale content. Our findings suggest that contrastive decoding offers a simple, training-free approach to mitigating query shortcuts.
\end{abstract}

\section{Introduction}

Chain-of-Thought (CoT) prompting~\cite{Wei_NeurIPS22_CoT, Kojima_NeurIPS22_lets_think_step_by_step} has emerged as an effective inference strategy for improving the reasoning performance of Large Language Models (LLMs).
By generating multiple intermediate reasoning steps, or rationales, before producing a final answer, CoT enables LLMs to solve complex problems.
Beyond improving task performance, these rationales are often interpreted as explanations of the model's predictions.
However, recent studies have revealed that generated rationales do not necessarily reflect the actual decision-making process~\cite{Turpin_NeurIPS23_language_models_dont_always_say, Arcuschin_ICLRW25_cot_is_not_always_faithful, Lewis_EMNLP25_analysing_cot}. 
In particular, LLMs may rely on shortcut information directly from the input query, even when their rationales suggest a different conclusion. 
Such query-shortcut behavior weakens the behavioral dependency between rationales and final answers, raising concerns about the reliability of CoT.

Prior research has investigated this issue by rationale interventions that measure whether changes to reasoning steps affect final predictions~\cite{Lanham_2023_measuring_faithfulness_cot, Turpin_NeurIPS23_language_models_dont_always_say} or improving rationale reliability with additional supervision or rationale selection~\cite{Paul_EMNLP24_measuring_improving_faithfulness_cot, Sun_2026_fact-e}. 
While these studies provide important insights, a complementary question is whether query-shortcut behavior can be mitigated directly during answer decoding, without modifying model parameters or employing external evaluators.
In standard CoT decoding, the query and generated rationale are jointly provided as context for next-token prediction. 
However, this formulation does not explicitly distinguish the contributions of the query and the rationale, allowing the model to generate answers primarily from query-conditioned predictions.

In this work, we investigate the application of Context-Aware Decoding (CAD)~\cite{Shi_NAACL24_cad} to mitigate query-shortcut behavior in CoT. 
CAD is an established contrastive decoding approach originally developed to prioritize contextual evidence over a model's parametric knowledge. 
We examine whether the same principle can encourage LLMs to rely more heavily on self-generated rationales during decoding. 
Specifically, we study Rationale Contrastive Decoding (RCD), an application of CAD in which the context part is a CoT rationale rather than an external document. 
RCD contrasts the next-token distributions conditioned on both the query and rationale with those conditioned only on the query. 
By subtracting query-only logits, this decoding rule increases the relative preference for tokens supported by the rationale, potentially suppressing query-shortcut behavior without additional training. 
We further interpret this established decoding formulation as a KL-regularized reward-maximization objective, in which the reward is the log-density ratio between the rationale-conditioned and query-only distributions.

We empirically investigate RCD through comprehensive experiments across multiple reasoning benchmarks. 
First, intervention analyses with ground-truth and randomly substituted rationales demonstrate that RCD makes final predictions more sensitive to rationale content, providing behavioral evidence of increased rationale–answer dependency. 
Second, RCD consistently improves answer accuracy over standard CoT across five reasoning benchmarks and multiple model families. 
Third, an analysis of rationale correctness reveals that the performance gains primarily stem from correcting answers that fail to follow valid rationales, rather than repairing incorrect reasoning. 
These findings suggest that applying contrastive decoding to CoT can mitigate query-shortcut behavior and improve the utilization of generated rationales. 

\section{Preliminary}\label{sec:preliminary}
\subsection{Next-token Prediction in LLM}
Consider an auto-regressive LLM, parameterized by $\theta$.
Given a query $q$, an LLM generates an output token sequence $\bm{y}=(y_1,\dots,y_L)\in\mathcal{V}^L$ following:
\begin{align}
     p(\bm{y}|q) = \prod^{L}_{i=1}p_\theta(y_i|\bm{y}_{<i},q),\label{eq:llm_cond_dist}
\end{align}
where $L$ is token length, $\mathcal{V}$ is token vocabulary, and $\bm{y}_{<i}$ are preceding output tokens.
$p_\theta(y_i|\bm{y}_{<i},q)$ over $\mathcal{V}$ is computed by the softmax of the model's output $\operatorname{logits}_\theta(y_i|\bm{y}_{<i},q)$ as follows.
\begin{gather}
     p_\theta(y_i|\bm{y}_{<i},q) = \sigma(\operatorname{logits}_\theta(y_i|\bm{y}_{<i},q)) \\
     = \frac{\exp(\operatorname{logits}_\theta(y_i|\bm{y}_{<i},q))}{\sum_{{w}\in\mathcal{V}}\exp(\operatorname{logits}_\theta(y_i=w|\bm{y}_{<i},q))}.
\end{gather}
Each token $y_i$ is generated from $p_\theta(y_i|\bm{y}_{<i},q)$ via a decoding strategy such as greedy decoding and sampling.
We define a function for an arbitrary decoding strategy:
\begin{equation}
     \bm{y} = \operatorname{generate}_\theta(q).\label{eq:llm_generate}
\end{equation}

\subsection{Chain-of-Thought (CoT)}
CoT aims to enhance LLM reasoning by incorporating multi-step rationale generation during decoding~\cite{Wei_NeurIPS22_CoT, Kojima_NeurIPS22_lets_think_step_by_step}.
First, LLMs generate a rationale $r$ from query $q$ with instruction prompt, e.g., \texttt{Let's think step-by-step}~\cite{Kojima_NeurIPS22_lets_think_step_by_step}, as follows.
\begin{equation}
     r = \operatorname{generate}_\theta(q).\label{eq:rationale_generation}
\end{equation}
Adding $r$ to the input context, LLMs generate the final answer by the next-token prediction, similar to Eq.~(\ref{eq:llm_generate}):
\begin{align}
     \bm{y} = \operatorname{generate}_\theta(q,r).\label{eq:llm_cot_generate}
\end{align}
For simplicity, we assume that the rationale and answer generation steps are separated, and that the generated rationale can be intervened by alternative rationales, e.g., ground truth.

In Eq.~\eqref{eq:llm_cot_generate}, each answer token $y_i$ is decoded by the next-token distribution $p_\theta(y_i|\bm{y}_{<i},q,r)$, and thus, it is not guaranteed that the output tokens are based on the rationale $r$, i.e., LLMs can directly answer from the query $q$.
In fact, previous works have reported the unfaithful behavior of CoT, such as post-hoc rationalizing by biasing toward the order of multiple-choice options for the answer~\cite{Turpin_NeurIPS23_language_models_dont_always_say}, silent error correction, where the model implicitly fixes the computational or logical errors in the rationales due to shortcuts~\cite{Arcuschin_ICLRW25_cot_is_not_always_faithful}.
These findings indicate that standard CoT can exhibit weak or inconsistent behavioral dependence on its rationale.

\section{Method}\label{sec:method}
\subsection{Rationale Contrastive Decoding (RCD)}
We investigate whether context-aware decoding can mitigate query-shortcut behavior in CoT by increasing the influence of rationales during answer generation. 
To this end, we study Rationale Contrastive Decoding (RCD), which applies Context-Aware Decoding (CAD;~\citet{Shi_NAACL24_cad}) to self-generated CoT rationales rather than external contexts (Fig.~\ref{fig:method}).
Following CAD, RCD instead reweights this distribution using the density ratio: 
\begin{equation}
     \hat{p}_\theta(y_i) \propto p_\theta(y_i|\bm{y}_{<i},q,r) \times \left(\frac{p_\theta(y_i|\bm{y}_{<i},q,r)}{p_\theta(y_i|\bm{y}_{<i},q)}\right)^\alpha,\label{eq:rcd}
\end{equation}
where $\alpha$ controls the contrastive strength.
During decoding, this is equivalently implemented by the logit-level computation in a batch-parallel manner: 
\begin{align}
     \hat{\operatorname{logits}}_\theta(y_i) :=& (1+\alpha)\operatorname{logits}_\theta(y_i|\bm{y}_{<i},q,r) \nonumber\\
     &- \alpha\operatorname{logits}_\theta(y_i|\bm{y}_{<i},q).\label{eq:rcd_logits}
\end{align}
For $\alpha>0$, the density ratio increases the relative preference for tokens whose probability rises when conditioning on the rationale.
Unlike logit addition, which reinforces tokens favored by both distributions, subtraction emphasizes the information contributed by the rationale beyond the query alone as shown in Section~\ref{sec:experiments}.
This aims to mitigate query-shortcut behavior rather than simply increase agreement with query-only predictions.

\begin{figure}[t]
    \centering
    \resizebox{\columnwidth}{!}{
    \begin{tikzpicture}[
        font=\sffamily,
        node distance=1cm and 1cm,
        box/.style={draw, rounded corners, fill=blue!10, text centered, minimum height=1cm, minimum width=2.5cm, text width=2.5cm},
        llm/.style={draw, thick, fill=blue!20, text centered, minimum height=1.2cm, minimum width=2cm},
        mathnode/.style={text centered, minimum height=1cm},
        arrow/.style={-{Stealth[scale=1.2]}, thick},
        dashedarrow/.style={-{Stealth[scale=1.2]}, thick, dashed},
        coefficient/.style={font=\large, text centered},
    ]

    \node[box, fill=green!10, text width=3.5cm] (input_tgt) {\textbf{Input:}\\Query ($q$) + Rationale ($r$)};
    \node[llm, right=of input_tgt] (llm_tgt) {LLM};
    \node[mathnode, right=of llm_tgt] (prob_tgt) {$\log p_\theta(y_i|\bm{y}_{<i},q,r)$};

    \node[box, fill=red!10, text width=3.5cm, below=1cm of input_tgt] (input_ctx) {\textbf{Input:}\\Query ($q$) only};
    \node[llm, right=of input_ctx] (llm_ctx) {LLM};
    \node[mathnode, right=of llm_ctx] (prob_ctx) {$\log p_\theta(y_i|\bm{y}_{<i},q)$};

    \node[circle, draw, thick, right=1.5cm of prob_tgt, yshift=-1.1cm, text centered, inner sep=2pt] (minus) {$\boldsymbol{-}$};
    \node[box, fill=purple!10, right=1cm of minus, minimum width=2cm] (output) {\textbf{Next-token}\\$y_i$};

    \draw[arrow] (input_tgt) -- (llm_tgt);
    \draw[arrow] (llm_tgt) -- (prob_tgt);
    \draw[arrow] (prob_tgt.east) -- ++(0.5,0) -| (minus.north);

    \draw[arrow] (input_ctx) -- (llm_ctx);
    \draw[arrow] (llm_ctx) -- (prob_ctx);
    \draw[arrow] (prob_ctx.east) -- ++(0.5,0) -| (minus.south);

    \draw[arrow] (minus) -- node[above] {} (output);
    
    \node (mult_top) [coefficient, right=0.3cm of prob_tgt.east, yshift=0.3cm] {$\times(1 + \alpha)$};
    \node (mult_bot) [coefficient, right=1cm of prob_ctx.east, yshift=-0.3cm] {$\times\alpha$};
    
    \begin{scope}[on background layer]
        \node[draw, dashed, fill=gray!5, rounded corners, inner sep=0.5cm, fit=(llm_tgt) (llm_ctx) (minus)] (bg) {};
        \node[above, font=\large\bfseries] at (bg.north) {Rationale Contrastive Decoding};
    \end{scope}
    
    \end{tikzpicture}
    }
    \caption{
    Overview of Rationale Contrastive Decoding (RCD), which applies context-aware decoding to CoT answer generation. Instead of standard decoding, RCD contrasts the rationale-conditioned pass and the query-only pass to explicitly strengthen the dependency on the rationales, without any parameter updates.}
    \label{fig:method}
    \vspace{-5mm}
\end{figure}

\subsection{Theoretical Insight}
Eq.~(\ref{eq:rcd}) can be interpreted as the closed-form solution to the following KL-regularized reward maximization problem~\cite{Schulman_ICML15_TRPO, Schulman_ICML15_PPO}:
\begin{equation}
    \max_{\pi} \mathbb{E}_{\pi}[R] - \beta \mathbb{D}_\mathrm{KL}[\pi || \pi_\mathrm{ref}],\label{eq:rcd_as_reward_max}
\end{equation}
where $R=\log\frac{p_\theta(y_i|\bm{y}_{<i},q,r)}{p_\theta(y_i|\bm{y}_{<i},q)}$ is reward function, $\pi_\mathrm{ref}=p_\theta(y_i|\bm{y}_{<i},q,r)$ is reference policy, and $\beta$ is a hyperparameter modulating the KL-penalty term.
\begin{theorem}\label{prop:rcd}
Sampling by Eq.~(\ref{eq:rcd}) is equivalent to sampling from the optimal policy $\pi^*$ for Eq.~(\ref{eq:rcd_as_reward_max}).
\end{theorem}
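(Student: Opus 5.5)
The plan is to use the standard closed-form solution of KL-regularized reward maximization over distributions on the finite vocabulary $\mathcal{V}$, and then match the resulting Gibbs distribution to Eq.~(\ref{eq:rcd}) by an appropriate choice of $\beta$. Fix the prefix $(\bm{y}_{<i},q,r)$. Treat $\pi$ as a probability vector over $\mathcal{V}$, and write $R(w)=\log\frac{p_\theta(w|\bm{y}_{<i},q,r)}{p_\theta(w|\bm{y}_{<i},q)}$ and $\pi_\mathrm{ref}(w)=p_\theta(w|\bm{y}_{<i},q,r)$. Both are well defined and strictly positive, since they come from softmax outputs.

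First I will rewrite the objective in Eq.~(\ref{eq:rcd_as_reward_max}) as
\begin{equation*}
\sum_{w}\pi(w)R(w)-\beta\sum_w \pi(w)\log\frac{\pi(w)}{\pi_\mathrm{ref}(w)}
= -\beta\,\mathbb{D}_\mathrm{KL}[\pi\,\|\,\pi^*]+\beta\log Z,
\end{equation*}
where
\begin{equation*}
\pi^*(w)=\frac{1}{Z}\,\pi_\mathrm{ref}(w)\exp\bigl(R(w)/\beta\bigr),\qquad Z=\sum_{w}\pi_\mathrm{ref}(w)\exp\bigl(R(w)/\beta\bigr).
\end{equation*}
This identity follows by adding and subtracting $\log Z$ inside the sum and using $\sum_w\pi(w)=1$. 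Because $Z$ does not depend on $\pi$, and KL divergence is nonnegative with equality iff its two arguments coincide (Gibbs' inequality), the unique maximizer is $\pi=\pi^*$. This completion-of-KL argument is preferable to Lagrange multipliers because it gives uniqueness and global optimality at once, with no concavity checks.

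Next I will substitute the specific reward:
\begin{equation*}
\exp(R/\beta)=\left(\frac{p_\theta(y_i|\bm{y}_{<i},q,r)}{p_\theta(y_i|\bm{y}_{<i},q)}\right)^{1/\beta},
\end{equation*}
so that $\pi^*(y_i)\propto p_\theta(y_i|\bm{y}_{<i},q,r)\left(\frac{p_\theta(y_i|\bm{y}_{<i},q,r)}{p_\theta(y_i|\bm{y}_{<i},q)}\right)^{1/\beta}$. Setting $\beta=1/\alpha$ for $\alpha>0$ recovers exactly Eq.~(\ref{eq:rcd}). Both expressions are then normalized over the same finite set, so the distributions are identical, and sampling from either is the same procedure. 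I will also note the consistency with Eq.~(\ref{eq:rcd_logits}): taking logs shows that $\log\pi^*$ equals $(1+\alpha)\operatorname{logits}_\theta(\cdot|q,r)-\alpha\operatorname{logits}_\theta(\cdot|q)$ up to a token-independent constant, since the softmax normalizers cancel into $Z$. Finally, applying the argument independently at each position $i$ gives the statement for the whole autoregressive answer sequence.

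The only real subtlety is the correspondence $\beta=1/\alpha$, which the statement leaves implicit. It requires $\alpha>0$; the boundary case $\alpha=0$ corresponds to $\beta\to\infty$, where $\pi^*=\pi_\mathrm{ref}$ is standard CoT. Apart from making this correspondence explicit, the proof is the routine Gibbs-variational computation.
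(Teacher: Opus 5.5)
Your proof is correct and is essentially the paper's: the paper cites the closed form $\pi^*\propto\pi_\mathrm{ref}\exp(R/\beta)$ from Appendix A.1 of the DPO paper, whose proof is exactly your completion-of-KL/Gibbs-inequality argument, and then substitutes the same $R$, $\pi_\mathrm{ref}$ and $\beta=1/\alpha$. In effect you have inlined the cited lemma and added uniqueness, the $\alpha>0$ requirement, and the logit-level consistency check. One small caution: read your closing remark as per-position equivalence at every step, since the per-token RCD sequence distribution is not in general the optimum of a sequence-level objective with summed rewards (the prefix-dependent per-step normalizers do not cancel).
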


\begin{proof}
See Appendix~\ref{sec:appendix_proof}.
\end{proof}

Intuitively, maximizing Eq.~\eqref{eq:rcd_as_reward_max} means finding the next-token distribution $\pi$ that yields a token clearly highlighting the difference from the query-only distribution while preserving the likelihood of the original query-rationale conditional distribution.
As a result, the LLM is expected to generate an answer by relying heavily on the CoT rationale $r$.

\begin{table}[t]
\centering
\caption{Intervention Analysis with Ground-Truth (GT) and Irrelevant (Random) Rationales.
}
\label{tb:intervention}
\vspace{-2mm}
\resizebox{0.9\columnwidth}{!}{
\begin{tabular}{@{}lccc@{}}
\toprule    
 & \multicolumn{3}{c}{\textbf{GSM8K}} \\
 & Self ($\uparrow$) & GT ($\uparrow$) & Random ($\downarrow$) \\
\midrule
&\multicolumn{3}{@{}c}{\cellcolor{LightRed}\textbf{Qwen3-8B}} \\
Direct & 50.14$\pm$.17 & $-$ & $-$ \\
CoT & 89.16$\pm$.13 & 94.93$\pm$.04 & 43.35$\pm$.03 \\
Logit Addition & 81.26$\pm$.40 & 83.41$\pm$.68 & 50.82$\pm$.72 \\
RCD & \textbf{89.91$\pm$.28} & \textbf{95.82$\pm$.01} & \textbf{27.79$\pm$.03} \\
\bottomrule
\end{tabular}
}
\vspace{-3mm}
\end{table}

\section{Experiments}\label{sec:experiments}
\subsection{Settings}\label{sec:exp_setting}
We used five representative reasoning-intensive benchmark datasets: GSM8K~\cite{Cobbe_2021_GSM8K}, MATH~\cite{Hendrycks_NeurIPS21_MATH}, CommonsenseQA~\cite{Talmor_NAACL19_CommonsenseQA}, AQuA-RAT~\cite{Ling_2017_AQuA}, and MMLU-Pro~\cite{Wang_2024_mmlu-pro}.
For LLMs, we used Qwen3~\cite{Yang_2025_qwen3}, Gemma3~\cite{Google_2025_gemma3}, and Llama-3.1~\cite{meta_2024_llama3}.
To examine the effect of contrastive subtraction, we additionally evaluate Logit Addition, which sums the rationale-conditioned and query-only logits, corresponding to a Product-of-Experts combination.
This baseline allows us to investigate whether subtracting query-only predictions is more effective than reinforcing them.
We set $\alpha=1$ for RCD by grid-searching on the GSM8K train subset.
For sampling, we used the official default parameters (temperature: 0.7, top\_k: 20, top\_p: 0.8) for Qwen3.
CoT and RCD shared the same rationales.
All experiments were run three times on an NVIDIA H100 with 80 GB of VRAM.

\subsection{Proof-of-Concept by Intervention Analysis}\label{sec:exp_poc}
We conduct an intervention analysis on GSM8K by replacing the rationale $r$ in Eq.~\eqref{eq:llm_cot_generate} with either ground-truth (GT) or random rationales.
If a decoding method sensitive to rationales, GT rationales should improve performance and irrelevant random rationales should degrade it.
We use the GT rationales provided in GSM8K and construct random rationales by shuffling them across examples.

Table~\ref{tb:intervention} shows the intervention results. Standard CoT improves over direct answering, but still keeps non-trivial accuracy even with random rationales, suggesting that it can often ignore the given rationale and rely on shortcut biases.
In contrast, RCD achieves higher accuracy with GT rationales and substantially lower accuracy with random rationales, indicating stronger behavioral dependence on rationale content.
Notably, Logit Addition degrades accuracy with both self-generated and GT rationales while retaining higher accuracy with random rationales.
These contrasting results suggest that subtracting query-only logits, rather than reinforcing them, is effective in increasing rationale–answer dependency. 

\begin{table}[t]
\centering
\caption{
Performance on Reasoning Benchmarks.
}
\label{tb:benchmark}
\vspace{-2mm}
\resizebox{0.95\linewidth}{!}{
\begin{tabular}{@{}lccccc}
\toprule
 & \textbf{GSM8K} & \textbf{MATH} & \textbf{CommonsenseQA} & \textbf{AQuA-RAT} & \textbf{MMLU-Pro} \\
\midrule
&\multicolumn{5}{@{}c}{\cellcolor{LightRed}\textbf{Qwen3-8B}} \\
Direct & 50.14$\pm$.17 & 1.73$\pm$.23 & 71.27$\pm$.81 & 58.88$\pm$.74 & 35.29$\pm$.48 \\
CoT & 89.16$\pm$.13 & 17.20$\pm$.35 & 78.29$\pm$.53 & 68.50$\pm$.68 & 50.24$\pm$.52 \\
Logit Addition & 81.26$\pm$.40 & 15.59$\pm$.66 & 77.19$\pm$.56 & 63.38$\pm$.53 & 48.62$\pm$.65 \\
RCD & \textbf{89.91$\pm$.28} & \textbf{27.60$\pm$.51} & \textbf{80.13$\pm$.47} & \textbf{70.41$\pm$.40} & \textbf{52.28$\pm$.41} \\

\bottomrule
\end{tabular}
}
\vspace{-3mm}
\end{table}

\subsection{Multiple Reasoning Benchmarks}\label{sec:exp_benchmark}
Table~\ref{tb:benchmark} shows the performance across the benchmarks.
We see that RCD outperforms the standard CoT and Logit Addition for all cases.
This suggests that RCD's increasing reliance on rationales is effective, particularly for reasoning-intensive tasks.
We also analyze RCD on varying $\alpha$, multiple LLM backbones, and the combination with Self-Consistency in the Appendix.

\begin{figure}[t]
    \centering
    \includegraphics[width=0.9\linewidth]{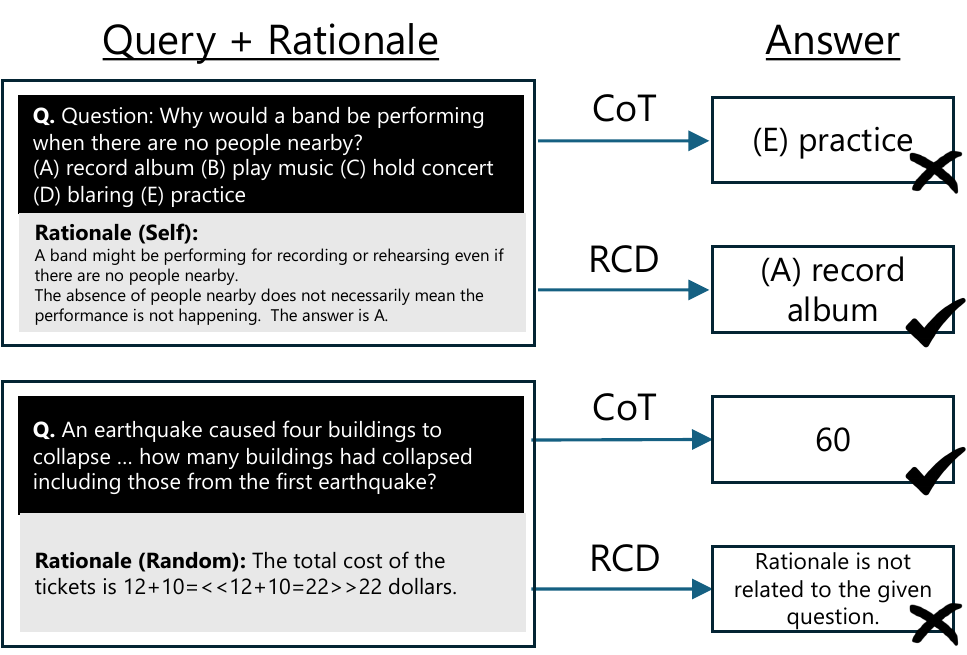}
    \caption{
    \textbf{Qualitative Evaluation with Qwen3-8B.} 
    Top: Given a self-generated rationale supporting answer (A), standard CoT predicts (E), whereas RCD follows the rationale and predicts (A). Bottom: Given an irrelevant rationale randomly sampled from another question, standard CoT produces the correct answer, whereas RCD indicates that the rationale is unrelated to the query. These examples illustrate RCD's stronger sensitivity to rationale content.
    }
    \label{fig:qualitative}
    \vspace{-5mm}
\end{figure}

\subsection{Qualitative Analysis}\label{sec:exp_qual_analysis}
Figure~\ref{fig:qualitative} shows examples of RCD in CoT.
In the top example, CoT ignores the rationale's conclusion, i.e., ``the answer is A'', whereas RCD generates the correct answer based on the conclusion.
The bottom case also highlights the RCD's reliance: CoT generates the correct answer by ignoring the rationale, whereas RCD increased reliance and answered that it cannot draw a conclusion from it.

\begin{table}[t]
\centering
\caption{
Prediction Transitions from CoT to RCD
}
\label{tb:human_eval}
\vspace{-2mm}
\resizebox{0.95\linewidth}{!}{
\begin{tabular}{@{}lcccc}
\toprule
 \textbf{Rationale} & \textbf{CoT Acc.} & \textbf{RCD Acc.} & \textbf{Wrong$\to$Correct} & \textbf{Correct$\to$Wrong} \\
\midrule
&\multicolumn{4}{@{}c}{\cellcolor{LightRed}\textbf{Qwen3-8B}} \\
Correct ($N$=1,201) & 97.25 & 98.08 & 10/33 (30.30\%) & 0/1,168 (0.00\%) \\
Incorrect ($N$=118) & 6.78  & 6.78  & 0/110 (0.00\%) & 0/8 (0.00\%) \\
\bottomrule
\end{tabular}
}
\vspace{-3mm}
\end{table}

\subsection{Analysis with Human Annotation}\label{sec:exp_human_analysis}
To examine how rationale correctness affects RCD, we annotated the rationales generated by Qwen3-8B on the GSM8K test set (1,319 samples) and analyzed prediction transitions from CoT to RCD. Table~\ref{tb:human_eval} summarizes the results. 
Among 1,201 correct rationales, RCD corrects 10 of 33 CoT errors (30.30\%) without corrupting any correct predictions. 
In contrast, among 118 incorrect rationales, RCD neither corrects any of the 110 CoT errors nor changes any of the eight correct predictions.
These results indicate that RCD's accuracy gains primarily stem from improving adherence to correct rationales, while increased reliance on incorrect rationales can occasionally be detrimental.

\section{Related Work}\label{sec:relatedwork}

Recent studies have examined CoT faithfulness through rationale interventions~\cite{Lanham_2023_measuring_faithfulness_cot} and methods for measuring or improving rationale-answer alignment~\cite{Paul_EMNLP24_measuring_improving_faithfulness_cot, Sun_2026_fact-e}.
These approaches primarily focus on faithfulness evaluation or require additional mechanisms (e.g., preference tuning and multiple-rationale evaluations), rather than directly providing comparable answer-decoding baselines.
In contrast, we investigate the application of existing Context-Aware Decoding~\cite{Shi_NAACL24_cad} to CoT rationales, without training or external evaluators.
We therefore focus on decoding baselines and separately examine compatibility with Self-Consistency~\cite{Wang_ICLR23_sc} in Appendix~\ref{sec:append_self_consistency}.

\section{Conclusion}\label{sec:conclusion}
In this work, we investigated the application of context-aware contrastive decoding to mitigate query-shortcut behavior in Chain-of-Thought reasoning. 
By contrasting rationale-conditioned and query-only next-token distributions, Rationale Contrastive Decoding (RCD) encourages LLMs to generate answers that depend more strongly on their rationales. 
Our experiments demonstrate increased behavioral rationale–answer dependency and consistent accuracy improvements over standard CoT across multiple reasoning benchmarks. 
These findings suggest that existing contrastive decoding techniques can improve the utilization of CoT rationales without additional training or external evaluators. 
Nevertheless, stronger rationale dependency does not guarantee faithful reasoning, particularly when the rationales themselves are incorrect. 
Our study thus highlights both the potential and the limitations of decoding-level interventions for mitigating query shortcuts in CoT.

\clearpage

\section*{Acknowledgment}
The initial ideas for this project were developed in February 2026 in collaboration with Manami Hasegawa of Japan Women's University, who was working as an NTT intern at the time.

\section*{Limitations}
First, although our rationale interventions show that RCD increases the behavioral dependency of final answers on the provided rationales, they do not establish that the rationales faithfully reflect the model's internal reasoning process. 
In particular, changes in accuracy under ground-truth or randomly shuffled rationales measure sensitivity to rationale content but do not necessarily indicate more correct or faithful reasoning. 
Therefore, our findings should be interpreted as evidence of stronger rationale–answer dependency rather than a general improvement in reasoning reliability.

Second, stronger rationale dependency can be detrimental when rationales are incorrect or incomplete. 
Since RCD does not verify rationale correctness, it may amplify erroneous reasoning while suppressing useful information from the query-only distribution. 
Future work could explore adaptive contrastive weighting based on rationale quality.

We further discuss potential societal and deployment risks in Appendix~\ref{sec:appendix_potential}.

\bibliography{ref}

\clearpage
\appendix
\section{Proof of Theorem~\ref{prop:rcd}}\label{sec:appendix_proof}
\begin{proof}
Consider the general KL-regularized reward maximization objective~\cite{Schulman_ICML15_TRPO,Schulman_ICML15_PPO,Rafailov_NeurIPS23_DPO}:
\begin{equation}
    \max_{\pi} \mathbb{E}_{\pi}[R(s,a)] - \beta \mathbb{D}_\mathrm{KL}[\pi(a|s) || \pi_\mathrm{ref}(a|s)],\label{eq:klmax_problem}
\end{equation}
where $R$ is a reward function, $s$ is a state (input context), and $a$ is an action (output).
Obviously, Eq.~(\ref{eq:rcd_as_reward_max}) is a special case of Eq.~(\ref{eq:klmax_problem}).
From Appendix A.1 of \citet{Rafailov_NeurIPS23_DPO}, the optimal policy $\pi^*(a|s)$ for this objective is given by
\begin{equation}
    \pi^*(a|s) \propto \pi_\mathrm{ref}(a|s) \exp\left(\frac{1}{\beta} R(s,a)\right).\label{eq:klmax_optimal}
\end{equation}
In the maximization of Eq.~(\ref{eq:rcd_as_reward_max}), given $a=y_i$ and $s=(\bm{y}_{i<},q,r)$, we can set each component in Eq.~(\ref{eq:klmax_problem}) as $\pi_\mathrm{ref}(a|s) = p_\theta(y_i|\bm{y}_{<i},q,r)$, $R(s,a)=\log\frac{p_\theta(y_i|\bm{y}_{<i},q,r)}{p_\theta(y_i|\bm{y}_{<i},q)}$, and $\beta = 1/\alpha$.
Substituting them into Eq.~\eqref{eq:klmax_optimal} yields
\begin{align*}
&\pi^*(a|s) \\
&\propto p_\theta(y_i|\bm{y}_{<i},q,r) \exp\left(\alpha \log\frac{p_\theta(y_i|\bm{y}_{<i},q,r)}{p_\theta(y_i|\bm{y}_{<i},q)}\right) \\
&= p_\theta(y_i|\bm{y}_{<i},q,r)\left(\frac{p_\theta(y_i|\bm{y}_{<i},q,r)}{p_\theta(y_i|\bm{y}_{<i},q)}\right)^\alpha \\
&= \text{Eq.~\eqref{eq:rcd}}.
\end{align*}
Therefore, sampling by Eq.~(\ref{eq:rcd}) is indeed equivalent to sampling from the optimal policy $\pi^*(a|s)$ for the maximization in Eq.~\eqref{eq:rcd_as_reward_max}.
\end{proof}
Note that while this theoretical framework is shared with RED~\cite{Yamaguchi_CVPR26_RED}, which multiplies the separated image-conditioned and rationale-conditioned distributions as product-of-experts for multimodal CoT, our RCD and RED have different formalizations and objectives.

\section{Potential Risks}\label{sec:appendix_potential}

First, RCD has potential dual-use risks. By making answers more consistent with explicit rationales, the method could be misused to produce more persuasive but misleading explanations, such as rationalized misinformation, deceptive arguments, or fabricated justifications. While our work does not target such applications, future deployment of rationale-sensitive decoding methods should consider monitoring, provenance tracking, and safeguards against persuasive misuse.

Second, the fairness and robustness implications of RCD remain underexplored. Our experiments focus on standard English reasoning benchmarks and do not evaluate multilingual, culturally grounded, or demographically sensitive scenarios. If the generated rationales encode social biases or culturally narrow assumptions, increasing the model's reliance on them may reinforce biased outputs. Future work should evaluate RCD on more diverse languages, domains, and user populations before deployment in broad-facing applications.

\section{Artifact Licenses and Terms of Use}

We use existing public artifacts only for research evaluation. Specifically, our experiments use public reasoning benchmarks, GSM8K, MATH, CommonsenseQA, AQuA-RAT, and MMLU-Pro, and the Qwen3/Gemma3/Llama3.1 model families. We do not create, modify, or redistribute these artifacts as part of this work; we only use them to evaluate inference-time decoding methods. Table~\ref{tab:artifact_licenses} summarizes the licenses or terms reported by the corresponding public artifact pages.

\begin{table}[t]
\centering
\small
\resizebox{\columnwidth}{!}{
\begin{tabular}{lll}
\toprule
Artifact & Type & License / Terms \\
\midrule
GSM8K & Dataset & MIT License \\
MATH & Dataset & MIT License \\
CommonsenseQA & Dataset & MIT License \\
AQuA-RAT & Dataset & MIT License \\
MMLU-Pro & Dataset & Apache License 2.0 \\
Qwen3 & Model & Apache License 2.0 \\
Gemma3 & Model & Gemma Terms of Use \\
Llama3.1 & Model &  LLAMA 3.1 Community License \\
\bottomrule
\end{tabular}
}
\caption{Licenses or terms of use for the public artifacts used in this work, based on their public dataset or model cards.}
\label{tab:artifact_licenses}
\end{table}

Our use of these artifacts is consistent with their intended research use: the datasets are used as evaluation benchmarks, and the pretrained language models are used as fixed inference backbones. We do not fine-tune the models, release model derivatives, or redistribute benchmark data. We cite the original papers and artifact pages associated with each dataset and model. Users who reproduce or extend our experiments should consult the latest official artifact pages and comply with the corresponding licenses and terms of use.

\section{Use of AI Assistant}
We used ChatGPT to polish the structure and grammar of the manuscript.
We used Claude Code as a supporting tool for building experimental code.

\section{Instruction Prompts}

\begin{promptbox}[title={Instruction for Rationale Generation}]
\textbf{\#\# Instruction}

For the provided question, please provide detailed step-by-step reasoning descriptions as the rationale to output the answer. Please compose your output with short and clear sentences. Output only reasoning descriptions and DO NOT output the final answer.

\textbf{\#\# Question}

[Placeholder for Question]

\textbf{\#\# Response}

\end{promptbox}

\begin{promptbox}[title={Instruction for Answer Generation}]
\textbf{\#\# Instruction}

Answer the following question based on the given rationales. Please directly output answer in short.

\textbf{\#\# Rationales}

[Placeholder for Rationales]

\textbf{\#\# Question}

[Placeholder for Question]

\textbf{\#\# Response}

\end{promptbox}

\begin{promptbox}[title={Instruction for Direct Baseline}]
\textbf{\#\# Instruction}

Answer the following question directly in short without reasoning steps.

\textbf{\#\# Question}

[Placeholder for Question]

\textbf{\#\# Response}

\end{promptbox}

\section{Grid Search for $\alpha$}
For the validation dataset, we randomly extracted 1,000 samples from the training set of GSM8K.
Then, we searched $\alpha$ from the search space of $\{0.1, 0.3, 0.5, 1, 3, 5\}$.

\subsection{Ablation Study}\label{sec:exp_ablation}
\noindent\textbf{Effects of $\alpha$.}
Figure~\ref{fig:ablation_alpha} shows the validation performance when varying $\alpha$.
We observed that increasing $\alpha$ enhances the reliance on the rationales, but accuracy actually decreases once a certain value is exceeded.
This is because if the query-only conditions in Eq.~\eqref{eq:rcd_logits} are significantly reduced with larger $\alpha$, the performance of the query itself also declines.

\noindent\textbf{Model Generality.}
We show the performance when varying the model size and families in Table~\ref{fig:ablation_size}.
RCD stably performed on multiple models.
The performance gain over CoT increases with model size, suggesting that RCD can significantly improve answer quality with stronger rationales when a model has strong reasoning capabilities.

\begin{figure}[t]
    \centering
      \includegraphics[width=\columnwidth]{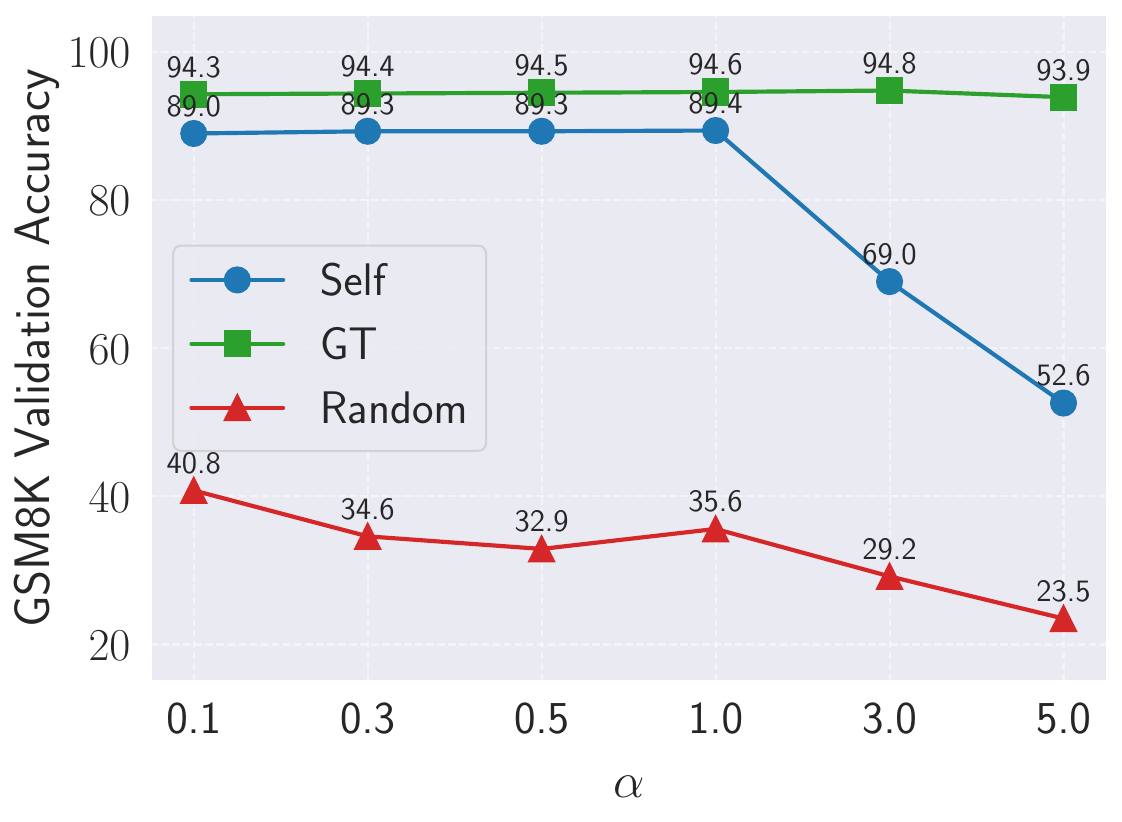}
      \vspace{-8mm}
      \caption{Effects of $\alpha$ on GSM8K validation accuracy (\%) using Qwen3-8B. We evaluate self-generated (Self), ground-truth (GT), and randomly shuffled (Random) rationales on 1,000 validation examples. Increasing ($\alpha$) generally reduces accuracy with random rationales, while excessively large values degrade accuracy with self-generated rationales, illustrating the trade-off between rationale reliance and answer accuracy.}\label{fig:ablation_alpha}
\end{figure}

\begin{table}[t]
    \centering
    \captionof{table}{GSM8K test accuracy (\%) across different model families and sizes, including Qwen3, Gemma3, and Llama-3.1. We compare direct answering (Direct), standard CoT, and RCD with ($\alpha=1$). CoT and RCD share the same generated rationales. RCD consistently improves accuracy over standard CoT across all evaluated models, demonstrating its applicability across model architectures and scales.}\label{fig:ablation_size}
    \resizebox{\linewidth}{!}{
    \begin{tabular}{lccc}
    \toprule    
     & \multicolumn{3}{c}{\textbf{GSM8K}} \\
     Model & Direct & CoT & RCD \\
    \midrule
    Qwen3-0.6B  & 23.49$\pm$.11 & 47.44$\pm$.18 & \textbf{47.96}$\pm$.22 \\
    Qwen3-8B    & 50.14$\pm$.17 & 89.16$\pm$.13 & \textbf{89.91$\pm$.28} \\
    Qwen3-14B   & 38.38$\pm$.43 & 89.61$\pm$.22 & \textbf{90.31$\pm$.20} \\
    Qwen3-32B   & 48.23$\pm$.37 & 91.31$\pm$.33 & \textbf{93.27$\pm$.28} \\
    \midrule
    Gemma3-4B   &  8.59$\pm$.03 & 80.69$\pm$.13 & \textbf{81.93$\pm$.03} \\
    Gemma3-12B  & 18.38$\pm$.07 & 88.41$\pm$.21 & \textbf{90.18$\pm$.16} \\
    Llama3.1-8B & 54.63$\pm$.20 & 84.35$\pm$.12 & \textbf{85.59$\pm$.22} \\
    \bottomrule
    \end{tabular}
    }
    \vspace{-3mm}
\end{table}

\begin{table}[t]
\centering
\caption{
Computational Analysis (GSM8K)
}
\label{tb:computational}
\vspace{-2mm}
\resizebox{0.95\linewidth}{!}{
\begin{tabular}{@{}lcc}
\toprule
 & \textbf{Latency (s/query)} & \textbf{Peak VRAM (GB)} \\
\midrule
&\multicolumn{2}{@{}c}{\cellcolor{LightRed}\textbf{Qwen3-8B}} \\
Direct & 1.506 & 15.33\\
CoT & 2.823 & 15.64 \\
RCD & 2.864 & 15.86 \\
\bottomrule
\end{tabular}
}
\vspace{-3mm}
\end{table}

\section{Computational Analysis}
RCD requires two model evaluations at each answer-decoding step. To quantify the practical cost, we measured end-to-end latency and peak VRAM on Qwen3-8B/GSM8K under the same hardware and decoding settings.
Table~\ref{tb:computational} shows the results. 
Relative to CoT, RCD increases the observed end-to-end latency by only 1.5\% and peak VRAM by 1.4\% in our batch-parallel implementation. Thus, although RCD increases answer-stage model computation, its measured wall-clock and memory overhead are small in this setting.

\begin{table}[htpb]
    \centering
    \caption{
        Comparison with Self-Consistency (SC) on GSM8K using Qwen3-8B.
        $k$ denotes the number of sampled rationales.
        Latency is measured end-to-end per query.
    }
    \label{tab:self_consistency}
    \resizebox{\columnwidth}{!}{
    \begin{tabular}{lccc}
        \toprule
        Method & $k$ & Accuracy (\%) & Latency (s/query) \\
        \midrule
        CoT        & 1 & 89.16 & 2.82 \\
        RCD        & 1 & 89.91 & 2.86 \\
        SC         & 3 & 89.58 & 8.63 \\
        SC + RCD   & 3 & \textbf{90.79} & 8.65 \\
        \bottomrule
    \end{tabular}
    }
\end{table}

\section{Compatibility with Self-Consistency}
\label{sec:append_self_consistency}
We further examine whether RCD can be combined with Self-Consistency (SC; \citet{Wang_ICLR23_sc}), which aggregates predictions from multiple sampled rationales. 
Unlike SC, which improves answer selection through multiple reasoning paths, RCD modifies the answer decoding distribution conditioned on each rationale. We evaluate their combination on GSM8K using Qwen3-8B with three sampled rationales ($k=3$).

Table~\ref{tab:self_consistency} presents the results. 
Standard SC achieves 89.58\% accuracy, whereas combining SC with RCD improves accuracy to 90.79\%.
Similarly, in the single-rationale setting, RCD improves standard CoT from 89.16\% to 89.91\%. 
These results suggest that RCD is complementary to multi-sample aggregation. 
Furthermore, RCD incurs only a 1.5\% increase in end-to-end latency over standard CoT, indicating that its additional computational overhead is small in this setting.

\section{Human Annotation}
We annotated binary labels for the correctness of the generated rationales in the analysis of Section~\ref{sec:exp_human_analysis}.
For each triple of query, generated rationale, and ground truth answer, we asked ``Is the rationales behind this query and answer correct?''
We used one annotator for this annotation process.

\end{document}